\newcommand{\CI}{\mathrel{\perp\mspace{-10mu}\perp}}
\newtheorem{theorem}{Theorem}
\begin{document}

\title{Integrating overlapping datasets using bivariate causal discovery}

\author{Anish Dhir}
\affiliation{Babylon Health, London, United Kingdom}
\author{Ciar{\'a}n M. Lee}
\affiliation{Babylon Health, London, United Kingdom}
\affiliation{University College London, United Kingdom}

\begin{abstract}
 Causal knowledge is vital for effective reasoning in science, as causal relations, unlike correlations, allow one to reason about the outcomes of interventions. Algorithms that can discover causal relations from observational data are based on the assumption that all variables have been jointly measured in a single dataset. In many cases this assumption fails. Previous approaches to overcoming this shortcoming devised algorithms that returned all joint causal structures consistent with the conditional independence information contained in each individual dataset. But, as conditional independence tests only determine causal structure up to Markov equivalence, the number of consistent joint structures returned by these approaches can be quite large. The last decade has seen the development of elegant algorithms for discovering causal relations beyond conditional independence, which can distinguish among Markov equivalent structures. In this work we adapt and extend these so-called bivariate causal discovery algorithms to the problem of learning consistent causal structures from multiple datasets with overlapping variables belonging to the same generating process, providing a sound and complete algorithm that outperforms previous approaches on synthetic and real data.
 
 %In particular, they do not perform well if either the number of variables in each dataset, or the number of overlapping variables, is small. 
    %---in which some variables have never been jointly measured---

\end{abstract}

\maketitle

\section{Introduction} \label{Section: introduction}

Causal knowledge is fundamental to many domains of science and medicine. This is due to the fact that causal relations, unlike correlations, allow one to reason counterfactually and to analyse the consequences of interventions \cite{P,richens2019,perov2019}. While powerful approaches to discovering causal relations between multiple variables in the absence of randomised controlled trials have been developed \cite{Noise,shimizu2006linear,janzing2012information,mitrovic2018causal,Confounders,goudet2017learning,zhang2009identifiability,fonollosa2016conditional,lee2017causal,gasse2012experimental,tsamardinos2006max,P,Sprite,kalainathan2018sam}, many of these require all variables to be jointly measured in a single dataset. In many domains this assumption does not hold, due to ethical concerns, or technological constraints. For instance, in certain countries medical variables could be censored differently, meaning we only have access to joint measurements of certain variables; distinct medical sensors may measure different but overlapping aspects of a particular disease or physiological function, for example fMRI machines are unable to obtain measurements for every region of the brain simultaneously \cite{tillman2011learning}.
%; and countries may report country-specific economic variables in addition to a subset of the variables reported by other nations, due to specific financial reporting practices. 
In these examples, we are provided with multiple datasets, each recording a potentially different, but overlapping, set of variables. Such overlapping datasets occur frequently in medicine---clinical studies require strict ethical approval which restricts the ability to measure variables not required by the study, resulting in many studies which measure overlapping, but not coinciding, variables. Can these datasets be combined in such a way that causal relations between non-overlapping variables---which have never been jointly measured---be discovered? 

This problem was first studied in \cite{danks2009integrating}, with the state of the art achieved by the \textit{integration of overlapping datasets} (IOD) algorithm of \cite{tillman2011learning}. The authors employed conditional independence tests to learn the Markov equivalence class of each individual dataset, using this to determine the equivalence classes of consistent joint structures among all variables in the union of datasets. 
%consistent with the locally-learned structures from each individual dataset. 
% We define 'consistent' here as causal connections  
%Here, `consistent' refers to a graph with causal connections between non-jointly measured variables which do not contradict conditional independence statements learned from each individual dataset. 
A `consistent' joint structure is one whose conditional independences do not contradict those already learned from each individual dataset.

Approaches based on conditional independence tests are limited as they can only determine causal structures up to a Markov equivalence class.
Due to this, conditional independence approaches result in too many consistent answers, especially when the number of overlapping variables is small. 
They also fail to distinguish multiple causal structures between small numbers of variables, such as those depicted in Fig.~\ref{figure:causal sufficiency}---as they all belong to the same Markov equivalence class.
As many medical studies individually measure relatively small numbers of variables, this is a major roadblock to extracting useful causal information. For instance, \cite{savastano2017low} provide clinical studies that support the role of obesity in the development of low vitamin D, while \cite{despres2006abdominal} provide separate studies which associate obesity with an increased risk of heart failure. To extract useful information about the causal relationship between heart failure and vitamin D deficiency from these two overlapping datasets, methods beyond conditional independence tests are required.

Fortunately, the last decade has seen the development of elegant algorithms for discovering causal relations which go beyond conditional independence and can distinguish different members of the same Markov equivalence class---assuming all variables have been jointly measured \cite{Noise,shimizu2006linear,janzing2012information,mitrovic2018causal,Confounders,goudet2017learning,zhang2009identifiability,fonollosa2016conditional}. These algorithms, termed bivariate causal discovery, employ tools from machine learning and assumptions about what it means for one variable to cause another, allowing for a more fine-grained approach to discovering causal relations. 

This paper expands and adapts bivariate causal discovery algorithms to the problem of learning consistent causal structures from overlapping datasets. %, providing a sound and complete algorithm which outperforms previous approaches. 
Our main contributions are as follows: 
\begin{enumerate}
    \item A sound and complete algorithm for learning causal structure from overlapping datasets that leads to fewer consistent structures when compared against previous approaches.
    \item A robust comparison between our approach and the IOD algorithm on a range of synthetic and real world data. These cover the regimes of low overlap and low number of variables, low overlap and high number of variables, and high overlap and high number of variables.
    We also include a performance comparison of the algorithms as a function of the number of overlapping variables.
\end{enumerate}

\section{Related work} \label{Section: related work}

\textbf{Discovering causal structure from a single dataset:}
Methods for discovering causal structure from a single i.i.d. dataset largely fall into two categories. The first is \emph{global} causal discovery, which aims to learn a partially undirected version of the underlying DAG. There are two distinct approaches to this: constraint and score based. The constraint based approach uses conditional independence tests to determine which variables should share an edge in the causal structure. Examples include the PC \cite{Sprite}, IC \cite{P}, and FCI \cite{spirtes1995causal} algorithms. %There are also examples employing kernel-based conditional independence tests \cite{zhang2012kernel}. 
The score based approach utilizes a scoring function, such as Minimum Description Length, to evaluate each network with respect to some training data, and searches for the optimal network according to this function \cite{friedman1997bayesian}. 
%Supervised methods also exist \cite{bontempi2015dependency}. Hybrids employing both the above class of methods appear to outperform either method alone \cite{gasse2012experimental,tsamardinos2006max}. 

\begin{figure}[t] 
\centering
\begin{subfigure}
\centering
(a)\includegraphics[scale=.24]{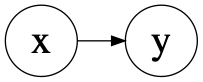}
\end{subfigure}
\begin{subfigure}
\centering
(b) \includegraphics[scale=0.24]{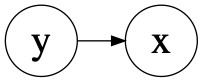}
\end{subfigure}
\centering
\begin{subfigure}
(c) \includegraphics[scale=0.24]{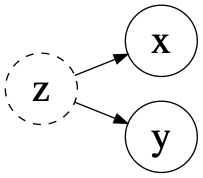}
\end{subfigure}
\centering
\begin{subfigure}
\ \ \  \ \ (d) \includegraphics[scale=0.24]{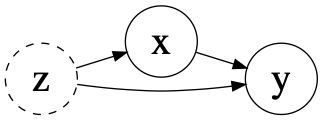}
\end{subfigure}
\centering
\begin{subfigure}
(e) \includegraphics[scale=0.24]{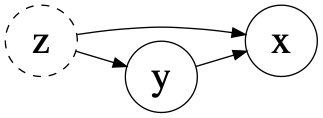}
\end{subfigure}
\caption{All causal structures between two correlated variables, solid nodes observed \& dashed latent.}\label{figure:causal sufficiency}
\end{figure} 

\textbf{Bivariate causal discovery on a single dataset:} The main limitation of global discovery algorithms is that they cannot always orient edges between dependent variables. That is, they can only learn causal structure up to Markov equivalence class. In particular, they cannot distinguish any of the structures in Fig.~\ref{figure:causal sufficiency}. The second category of causal discovery algorithm, termed \emph{bivariate causal discovery} (BCD), aims to overcome this by specifying some assumptions which---if satisfied---make the intuitive asymmetry between cause and effect manifest at an observational level. Examples include the Linear Non-Gaussian Additive Model (LiNGAM) \cite{shimizu2006linear}, Additive Noise Model (ANM)  \cite{Noise}, information geometric causal discovery algorithm \cite{daniusis2012inferring}, and the kernel conditional deviance causal discovery (KCDC) algorithm \cite{mitrovic2018causal}. %, among others \cite{Confounders,goudet2017learning,zhang2009identifiability,fonollosa2016conditional,lopez2015towards,kalainathan2018sam}. 

\textbf{Discovering causal structure from multiple overlapping datasets:} The first algorithm for learning causal structure from overlapping datasets was \emph{integration of overlapping networks} (ION) \cite{danks2009integrating}.
This was extended and improved by the \textit{integration of overlapping datasets} (IOD) algorithm \cite{tillman2011learning}.
The IOD algorithm takes in multiple datasets, where the variables of each dataset have a non-empty intersection with the union of the variables from the remaining datasets.  
It is assumed that all the datasets belong to the same data generating process. This implies that the datasets do not entail fundamentally contradictory information.
The aim of the algorithm is then to infer consistent graphical structures over all variables from the overlapping variable datasets. The fundamental insight behind IOD is that every graph constructed for each dataset is a marginalised version of the full graph between all variables. Thus, every graph that marginalises to the graphs constructed using each dataset is a candidate for the graphical structure of the true data generating process.
Instead of marginalising every candidate graph and then checking consistency, IOD uses graphical criteria on the full candidate graph to check for consistency, as will be detailed in Section~\ref{Section: methods}.
This results in candidate graphs over all variables, that encode the same conditional independence information as the individual datasets. Various extensions and modifications have also been developed \cite{tsamardinos2012towards,triantafillou2015constraint,sajja2015bayesian,claassen2010causal,janzing2018merging}. %, with some improvements in special cases reported in \cite{triantafillou2010learning}. 

%As discussed above, the last decade has seen the development of causal discovery algorithms which can distinguish different members of the same Markov equivalence class. This paper extends these algorithms to the problem of learning consistent causal structures from overlapping datasets, providing a sound and complete algorithm which outperforms IOD.

%Suppose for the sake of this example that we have performed randomized controlled trails (RCTs) between $X,Y$ and $Y,Z$, finding $X\longleftarrow Y$, that is $Y$ causes $X$, and $Y \longleftrightarrow Z$, that $Y$ and $Z$ share a latent common cause. This is illustrated in Fig.~\ref{figure:motivating example}(a).

%. Although they are not causally sufficient, these algorithms still allow us to reduce the number of consistent solutions generated relative to ION and IOD, as will be shown in Section~\ref{Section: experiments}. 

%We use this as a motivating example to construct a sound and complete algorithm employing bivariate causal disocvery to learn the minimal set of consistent causal structures from a collection of overlapping datasets, detailed in Section~\ref{subSection: bivariate causal discovery}.

\section{Motivating Example} \label{Section:motivating_example}

\begin{figure}[t]   
\begin{subfigure}
\centering
(a)\includegraphics[scale=.24]{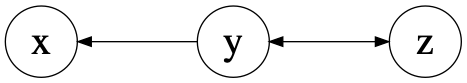} \\
\end{subfigure}
\begin{subfigure}
\centering
 (b) \includegraphics[scale=0.24]{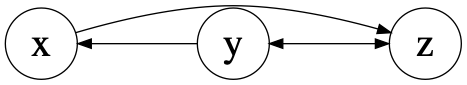} \\
\end{subfigure}
\centering
\begin{subfigure}
(c) \includegraphics[scale=0.24]{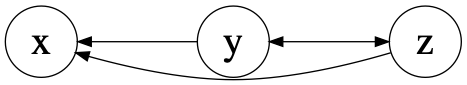}
\end{subfigure}
\caption{(a) Consistent joint structure. (b) \& (c) Joint structures ruled out by causal sufficiency.}\label{figure:motivating example}
\end{figure} 

 The following example illustrates the power of 
%  bivariate causal discovery algorithms in extracting from overlapping datasets consistent information about the causal relations between variables that were never jointly measured. in which some variables were never jointly measured 
%using bivariate casual information in overlapping datasets to infer consistent causal relations between variables that were never jointly measured.
bivariate casual discovery to learn consistent causal structures from overlapping datasets. The basic mechanics of our approach, and how it leverages bivariate causal discovery, anticipates our main algorithm, which is described in Section~\ref{subSection: bivariate causal discovery}. 

Consider two datasets, $\{X,Y\}$ and $\{Y,Z\}$. Our aim is to learn all consistent joint causal structures involving these three variables. 
We define a \emph{causally sufficient bivariate causal discovery algorithm} as one that can take in samples from two variables  $X,Y$ and output the correct causal structure from all the possible ones in Fig.~\ref{figure:causal sufficiency}---or is able to find all the structures from Fig.~\ref{figure:causal sufficiency} present in the dataset.
% One way such causal sufficiency could hold is if we are promised the data satisfies the identifiability requirements of the causal discovery algorithm we are considering
For now, we treat this algorithm as an oracle and discuss the justification of causal sufficiency at the end of this section. 

Suppose that using this oracle we find $X\longleftarrow Y$, that is $Y$ causes $X$, and $Y \longleftrightarrow Z$, that is $Y$ and $Z$ share a latent common cause, as illustrated in Fig.~\ref{figure:motivating example}(a).
As $X$ and $Z$ have never been jointly measured and could each have been measured under different conditions, we cannot use causal discovery on them.
However, we can posit a causal structure between them and check if this structure is consistent with the causal information extracted from the individual datasets. 
For instance, positing that $X$ causes $Z$, as depicted in Fig.~\ref{figure:motivating example}(b), leads to a direct cause from $Y$ to $Z$---mediated by $X$---as well as the original common cause between $Y$ and $Z$.
That is, the structure is as depicted in Fig.~\ref{figure:causal sufficiency}(d) for appropriate node labels. But this contradicts the original marginal structure, which detected only a common cause between $Y$ and $Z$, as depicted in Fig.~\ref{figure:causal sufficiency}(c).
As the algorithm used to learn this structure was promised to be causally sufficient, and hence can distinguish all structures in Fig.~\ref{figure:causal sufficiency}, we conclude $X$ cannot be a cause of $Z$. We can similarly conclude that $Z$ cannot be a cause of $X$. Hence, neither joint causal structures in Fig.~\ref{figure:motivating example}(b)\&(c) are consistent with the structures learned from each dataset. 

If we had employed previous approaches to learning consistent joint causal structures, such as ION \cite{danks2009integrating} and IOD \cite{tillman2011learning}, discussed in Section~\ref{Section: related work}, we would have been unable to rule out the structures in Fig.~\ref{figure:motivating example}(b) \& (c). Indeed, as ION and IOD employ conditional independence tests to discover underlying causal structure, they would not detect any contradiction between these joint structures and the marginal structures of each dataset. That is, due to the fact that all causal structure from Fig.\ref{figure:causal sufficiency} belong to the same Markov equivalence class, conditional independence tests cannot distinguish between directed causes and simultaneously direct-and-common causes. %Hence, the ION and IOD algorithms output a larger set of consistent joint causal structures than the approach described above. 

\subsection{Causal Sufficiency} 
This simple example suggests that exploiting causally sufficient bivariate causal discovery algorithms allows us to output a smaller solution set of joint causal structures consistent with individual datasets---thus getting closer to the true structure. While there are many algorithms which can only distinguish between the causal structures in Fig. \ref{figure:causal sufficiency} (a) \& (b) \cite{Noise,daniusis2012inferring,mitrovic2018causal}, there are a few bivariate causal discovery algorithms which are causally sufficient under certain assumptions, such as \cite{janzing2018detecting,hoyer2008estimation}.
For example, \cite{hoyer2008estimation} requires a non-Gaussian noise term and linear relationships between all the variables. \cite{janzing2018detecting} relies on a concentration of measure assumption and a scalar confounder.
Moreover, \cite{kalainathan2018sam,goudet2017learning} have tested the robustness of their algorithms to unobserved confounding, showing they can in some instances detect the presence of latent common causes.
These can be used to identify when algorithms that can only distinguish Fig. \ref{figure:causal sufficiency} (a) \& (b) can safely be applied. 
Causal sufficiency can thus be achieved under certain assumptions with current causal discovery algorithms. Additionally, expert knowledge and intervention-based studies, such as \cite{sachs2005causal}, can provide causally sufficient information. For instance, all we needed for Fig.~\ref{figure:motivating example}(b) to be ruled out was the knowledge that $X$ could not be a mediator between $Y$ and $Z$. In some domains, such as medicine, experts may provide such information. This knowledge could be used in conjunction with non-causally sufficient bivariate algorithms to achieve the same conclusions as above. % using causally sufficient algorithms. 
Additionally, as causal discovery may rely on multiple assumptions and interventions decouple common cause influence, intervention-based studies can detect the presence of latent confounding and provide causally sufficient information.

%If we only had access to a causal discovery algorithm that could distinguish Fig.~\ref{figure:causal sufficiency}(a)-(c) in our example, we could have used it in conjunction with such expert knowledge to come to the same conclusions as was achieved with a causally sufficient bivariate algorithm.

%This is because interventions decouple any influence by common causes and hence any causal links inferred can be safely assumed to be causally sufficient.

% For instance, all that was required for Fig.~\ref{figure:motivating example}(b) to be ruled out was the knowledge that $X$ could not be a mediator between $Y$ and $Z$. In some domains, such as medicine, experts may provide such information. 

%As we show in Section~\ref{subSection: bivariate causal discovery}, the main innovation of our algorithm is a method for faithfully storing such causally sufficient information, allowing us to easily determine if joint causal structures are consistent with the marginal structures of each dataset.

\vspace{-.2cm}
\section{Methods} \label{Section: methods}
%\subfile{Method}

We now provide a description of the IOD algorithm. %, starting with the required terminology. 
We then describe our approach to extending and exploiting bivariate causal discovery. We assume familiarity with graphical terminology, including active paths, m-separation, MAGs, PAGs, and related concepts. An overview of the required background terminology is provided in the Appendix.

\begin{table*}[t]
    \centering
    \begin{tabular}{ccccc}
        \textbf{Criterion 1, Fig.~\ref{figure:causal sufficiency}(a):} & $\quad
\text{\textbf{msep}}\left(X,Y\right)_{\overline{Y}}$ & $\quad \cancel{\text{\textbf{msep}}}\left(X,Y\right)_{\underline{Y}}$ & $\quad  \cancel{\text{\textbf{msep}}}\left(X, Y\right)_{\overline{X}}$ &  $\quad \text{\textbf{msep}}\left(X, Y\right)_{\underline{X}}$   \\
\textbf{Criterion 2, Fig.~\ref{figure:causal sufficiency}(b):} & $\quad
\cancel{\text{\textbf{msep}}}\left(X, Y\right)_{\overline{Y}}$ & $\quad \text{\textbf{msep}}\left(X Y\right)_{\underline{Y}}$ & $ \quad  \text{\textbf{msep}}\left(X, Y\right)_{\overline{X}}$ & $ \quad \cancel{\text{\textbf{msep}}}\left(X, Y\right)_{\underline{X}}$ \\
\textbf{Criterion 3, Fig.~\ref{figure:causal sufficiency}(c):} & $\quad
\text{\textbf{msep}}\left(X, Y\right)_{\overline{Y}}$ & $\quad \cancel{\text{\textbf{msep}}}\left(X,Y\right)_{\underline{Y}}$ & $\quad  \text{\textbf{msep}}\left(X, Y\right)_{\overline{X}}$ & $\quad \cancel{\text{\textbf{msep}}}\left(X, Y\right)_{\underline{X}}$ \\
\textbf{Criterion 4, Fig.~\ref{figure:causal sufficiency}(d):} & $\quad \text{\textbf{msep}}\left(X, Y\right)_{\overline{Y}}$ & $\quad \cancel{\text{\textbf{msep}}}\left(X, Y\right)_{\underline{Y}}$ & $ \quad  \cancel{\text{\textbf{msep}}}\left(X, Y\right)_{\overline{X}}$ & $\quad \cancel{\text{\textbf{msep}}}\left(X, Y\right)_{\underline{X}}$ \\
\textbf{Criterion 5, Fig.~\ref{figure:causal sufficiency}(e):} & $\quad
\cancel{\text{\textbf{msep}}}\left(X,Y\right)_{\overline{Y}}$ & $ \quad \cancel{\text{\textbf{msep}}}\left(X, Y\right)_{\underline{Y}}$ & $\quad  \text{\textbf{msep}}\left(X, Y\right)_{\overline{X}}$ & $\quad \cancel{\text{\textbf{msep}}}\left(X, Y\right)_{\underline{X}}$ \\
    \end{tabular}
    \caption{$(\dots)_{\overline{Z}}$ denotes a condition in a MAG with incoming edges removed from node $Z$, $(\dots)_{\underline{Z}}$ denotes outgoing edges removed, $\text{\textbf{msep}}(X,Y)$ denotes that $X$\&$Y$ are m-separated, and $\cancel{\text{\textbf{msep}}}(X,Y)$ that X\&Y are not m-separated. }
    \label{tab:criteria}
\end{table*}

\subsection{IOD Algorithm} \label{subSection: iod algorithm}
We informally described IOD in Section~\ref{Section: related work}. More formally, IOD can be broken into two parts:

\textbf{Part 1)} %Local graph construction:
Starting from fully connected, unoriented graphs $\mathcal{G}_1, \dots, \mathcal{G}_n$ for each variable set $\mathcal{V}_1, \dots, \mathcal{V}_n$, the first few steps of the FCI algorithm \cite{spirtes1995causal} are applied. Edges can be dropped and immoralities oriented using conditional independence tests on each dataset.
These processes are also carried out on a fully connected graph, $\mathcal{G}$, containing all the variables $\mathcal{V} = \bigcup^{n}_{i=1}\mathcal{V}_i$.
During this step, any conditional independence information, along with the conditioning set, are stored in a data structure called \textbf{Sepset}.
If two dependent variables from $\mathcal{V}_i$ are not conditionally independent given any other set of nodes, the pair is added, along with $\mathcal{V}_i$, to a data structure called \textbf{IP}.\footnote{In reality this step accesses a set \textbf{Possep} to obtain the independence information, see \cite{tillman2011learning}.}
To improve the robustness of the independence tests across datasets, the p-values of the tests for overlapping variables are pooled using Fisher's method \cite{fisher1992statistical}.
The output of this step is the graphs, $\mathcal{G}, \mathcal{G}_1, \dots \mathcal{G}_n$ along with the data structures \textbf{Sepset} and \textbf{IP}.

\textbf{Part 2)} % Global graph construction:
This part requires graphs $\mathcal{G}, \mathcal{G}_1, \dots \mathcal{G}_n$ along with \textbf{Sepset} and \textbf{IP}.
The global graph $\mathcal{G}$ now consists of a superset of edges and a subset of immoralities compared to the true data generating graph.
This motivates the next step, which considers edges to remove in $\mathcal{G}$ and, within the resulting graph, constructs immoralities with the unoriented edges.
Conditions for the sets \textbf{edges to remove} and \textbf{immoralities to orient} are given in \cite{tillman2011learning}.
As we do not know which combination of removed edges and oriented immoralities is the correct one, this step requires nested iterations over powersets of both sets.
At each iteration, $\mathcal{G}$ is converted to a PAG using the rules in \cite{Zhang:2007:CME:3020488.3020543}, which finds all invariant tails and arrows. The PAG is then converted to a MAG in its equivalence class and it is checked whether: (1) It is indeed a MAG, (2) m-separations in the MAG are consistent with those in \textbf{Sepset}, and (3) there is an inducing path between every pair in \textbf{IP} with respect to $\mathcal{V} \backslash \mathcal{V}_i$.
If a MAG satisfies all conditions, the corresponding PAG marginalises to the dataset PAGs, and is returned as a candidate true graph.

See Section 2 in Appendix and the Synthetic 2 experiment for a discussion of complexity.

\subsection{New approach leveraging bivariate causal discovery} \label{subSection: bivariate causal discovery}   

As we saw in Section~\ref{Section:motivating_example}, causally sufficient bivariate causal discovery appears\footnote{As noted by \cite{zhang2011discussion}, care should be taken when applying bivariate causal discovery to overlapping datasets. The identifiability requirements of such algorithms should hold under marginalisation of the graph. Employing algorithms which do not make parametric assumptions on the model---as in \cite{mitrovic2018causal}---is vital.} to greatly reduce the number of joint causal structures consistent with the causal information extracted from individual datasets. The reason for this is twofold. Firstly, bivariate causal discovery algorithms allow us to determine the specific member of the Markov equivalence class our local datasets belong to. Secondly, the ability, given some assumptions (Sec. \ref{Section:motivating_example}), to distinguish all structures in Fig.~\ref{figure:causal sufficiency} provides us with a richer set of conditions---beyond conditional independence---to check in order to ensure consistency.% if possible joint structures contradict causal information extracted from each dataset.

The algorithm developed here assumes access to a causally sufficient bivariate causal discovery algorithm. %, so that our approach is as general as possible.
Following on from the discussion of causal sufficiency at the end of Section~\ref{Section:motivating_example}, we justify this assumption based on the following: $(1)$ future causal discovery algorithms may increase the domain in which causal sufficiency holds; $(2)$ certain robustness tests for determining the presence of unobserved confounding are possible  \cite{kalainathan2018sam,goudet2017learning}; and $(3)$ expert domain knowledge---such as that provided by a medical professional---and intervention-based studies can also provide causally sufficient information.
We also assume that all datasets come from the same underlying data generating process, as in IOD, see Section~\ref{Section: related work}.

We require a method for checking that a MAG output by IOD encodes all causal information learned from each dataset, not just the conditional independence information. We hence require criteria for easily distinguishing each of the structures in Fig.~\ref{figure:causal sufficiency}.
This is so that we can \emph{graphically} check for consistency of causal information between a candidate MAG and the MAGs of the individual datasets.

Motivated by the manner in which IOD stores graphical information about conditional independence using two data structures, \textbf{Sepset} \& \textbf{IP}, %and also by Pearl's do-calculus rules \cite{P}, 
our criteria for distinguishing all structures from Fig.~\ref{figure:causal sufficiency} involves exploiting m-separations---checked by the absence of an active path between nodes---in mutilated versions of a MAG. Our criteria are listed in Table \ref{tab:criteria}.

%Motivated by the manner in which the IOD algorithm stores graphical information using conditional Independence's, we would like to store the causal information learned from each dataset. %using graph theoretic notions information when constructing the local graphs and then checking each candidate PAG against this.
%We thus require a simple criterion that can distinguish each of the causal structures in Fig.~\ref{figure:causal sufficiency}. Our criterion for encoding the causal sufficient information, which is inspired by Pearl's do-calculus rules \cite{P}, involves storing (in)dependence information in mutilated versions of the original DAG. Below, $\overline{Z}$ denotes removing incoming edges from $Z$, and $\underline{Z}$ denotes removing outgoing edges. 

To see that each criterion uniquely picks out one structure from Fig.~\ref{figure:causal sufficiency}, consider criterion 1, which picks out Fig~\ref{figure:causal sufficiency}(a). Here, there is a directed arrow from $X$ to $Y$. If the incoming arrow to $Y$, or outgoing arrow from $X$, is removed, then there is no longer an arrow from $X$ to $Y$, hence $X,Y$ are m-separated due to the absence of an active path between them. Conversely, if the outgoing arrow from $Y$, or the incoming arrow to $X$, is removed, then there is still an arrow from $X$ to $Y$, hence they are not m-separated---there is an active path between them. The conjunction of these facts uniquely picks out Fig.~\ref{figure:causal sufficiency}(a) and specifies criterion 1. Similar argument apply to all remaining structures from Fig.~\ref{figure:causal sufficiency}.

To illustrate the utility of these criteria, recall our motivating example from Section~\ref{Section:motivating_example}. A candidate MAG for this example must have $(Y, Z)$ m-separated---that is, no active path can exist between them---when both their incoming edges are removed. This fact is represented by criterion 3 from Table~1. Furthermore, $(Y, X)$ must be m-separated when the outgoing edges from $Y$ are removed, which is a condition of criterion 1 from Table~1. One of the possible solutions from section~\ref{Section:motivating_example}, Fig.~\ref{figure:motivating example}(b), violates criterion 3 for $(Y,Z)$, as $Y$ and $Z$ are not m-separated when their incoming edges are removed.
The other possible solution, Fig.~\ref{figure:motivating example}(c), violates criterion 1 for $(Y,X)$, as $Y$ and $X$ are not m-separated when $Y$'s outgoing edge is removed. In short, the criteria are simple graphical conditions---checked by the presence or absence of an active path in mutilated versions of the graph---applied to MAGs output by IOD to check consistency with information learned from causal discovery.

Our new algorithm proceeds as follows. First, part 1 of the IOD algorithm, described in Section~\ref{subSection: iod algorithm}, is applied to output (partially unoriented) graphs of each individual dataset, and a global graph. Next, bivariate causal discovery is applied to each dataset, the edges oriented accordingly in all the above graphs.
The causal structure found between each pair of dependent variables is then stored in three new data  structures---\textbf{Directed}$((X,Y))$ for Fig.~\ref{figure:causal sufficiency} (a) \& (b), \textbf{Common}$((X,Y))$ for Fig.~\ref{figure:causal sufficiency} (c), and \textbf{DirectedCommon}$((X,Y))$ for Fig.~\ref{figure:causal sufficiency} (d) \& (e). 
The order of the variables conveys the direction of the arrow in \textbf{Directed} and \textbf{DirectedCommon}.
Part 2 of the IOD algorithm is then applied to obtain candidate solutions for the joint causal structure which are consistent with all conditional independences learned in part 1 of IOD. Finally, these candidate solutions are filtered for bi-variate causal consistency by checking that each pair of variables in the above three data structures has the required causal structure in the candidate MAG. This is achieved by checking if the criterion relevant to the data structure holds in the MAG. If the requisite criteria are satisfied for all the pairs of variables stored in the data structures, then the candidate graph is accepted as a consistent solution, otherwise it is discarded. The steps are outlined in Algorithm~\ref{algorithm: new causal rules}.

\begin{algorithm}[h]
\caption{}\label{algorithm: new causal rules}
\textbf{Input:} Overlapping datasets $\{D_1,\dots, D_n\}$, IOD algorithm, bivariate causal discovery algorithm $C$. 
\\
\textbf{Output:} Consistent joint causal structures.
\begin{algorithmic}[1]
\State Apply part 1 of the IOD algorithm to return graphs $\mathcal{G}, \mathcal{G}_1, \dots \mathcal{G}_n$.
\State For each $i=1,\dots,n$, apply $C$ to variables in $D_i$ with unoriented edges and orient them accordingly in $\mathcal{G}, \mathcal{G}_1, \dots \mathcal{G}_n$. \\
Store causal relations by placing pairs of variables in the appropriate data structure: \textbf{Directed}, \textbf{Common}, and \textbf{DirectedCommon}.
\\
Apply part 2 of IOD algorithm to $\mathcal{G}, \mathcal{G}_1, \dots \mathcal{G}_n$. 
\\
Iterate through every MAG in the PAGs output by above and check relevant criteria hold for variables stored in step $3$ -  by checking active paths in the relevant mutilated MAGs. 
\\
\textbf{If} every MAG in a given PAG output from step $5$ is consistent \textbf{return} PAG \\
\textbf{else} \textbf{return} consistent individual MAGs
\end{algorithmic}
\end{algorithm} 

Algorithm~\ref{algorithm: new causal rules} is \emph{sound} in that each returned MAG has the same marginal causal structure between variables as that learned from the datasets, and \emph{complete} in that if a MAG exists with the same marginal causal structure between all variables as that learned from the dataset, it is returned by Algorithm~\ref{algorithm: new causal rules}. Proofs are in the Appendix.

\begin{theorem}[Soundness]
Let $V_i,V_j\in D_k$ be variables in the dataset $D_k$. If the marginal causal structure between $V_i,V_j$ learned from $ D_k$ by the BCD algorithm is Fig.~\ref{figure:causal sufficiency}(z), for $z\in\{a,b,c,d,e\}$, then it is also the marginal structure between $V_i,V_j$ in every MAG output by Algorithm~\ref{algorithm: new causal rules}, for all $i,j,k$. 
\end{theorem}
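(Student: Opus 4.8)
The plan is to argue by contradiction at the level of a single returned MAG and a single pair of variables, then quantify over all pairs and datasets. Suppose $\mathcal{M}$ is a MAG output by Algorithm~\ref{algorithm: new causal rules}, that $V_i,V_j\in D_k$, and that the BCD algorithm learned the marginal structure Fig.~\ref{figure:causal sufficiency}(z) between them. By the design of step~3, the pair $(V_i,V_j)$ (with the recorded orientation) is placed in exactly one of the data structures \textbf{Directed}, \textbf{Common}, \textbf{DirectedCommon}, according to whether $z\in\{a,b\}$, $z=c$, or $z\in\{d,e\}$. By step~5 and the acceptance test in step~6, $\mathcal{M}$ satisfies the criterion from Table~\ref{tab:criteria} associated to that data structure (criteria~1/2 for \textbf{Directed}, criterion~3 for \textbf{Common}, criteria~4/5 for \textbf{DirectedCommon}); otherwise $\mathcal{M}$ would have been discarded.

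First I would establish the key lemma: the criteria of Table~\ref{tab:criteria} are not merely necessary but characterising, i.e. in any MAG $\mathcal{M}$ over all of $\mathcal{V}$, the induced marginal structure between $V_i$ and $V_j$ is Fig.~\ref{figure:causal sufficiency}(z) \emph{if and only if} $\mathcal{M}$ satisfies criterion corresponding to $z$. The excerpt already sketches the ``only if'' direction (the paragraph beginning ``To see that each criterion uniquely picks out one structure''); I would make this rigorous using the standard fact that in a MAG, the presence of a directed edge $V_i \to V_j$ or a bidirected edge $V_i \leftrightarrow V_j$ is detected by m-connection/m-separation in the appropriately mutilated graph — deleting edges into $V_j$ kills any $V_i\to V_j$ contribution while leaving $V_i\leftrightarrow V_j$ active, and symmetrically for the other three mutilations. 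The ``if'' direction — that satisfying the four m-separation statements of a given criterion forces exactly that marginal structure — follows because the five criteria are mutually exclusive and exhaustive over the five possible marginal structures between two m-connected nodes, so satisfying one rules out the other four. A small point to check: one must confirm that in a MAG over $\mathcal{V}$, the marginal relation between any m-connected pair $V_i,V_j$ must itself be one of the five structures of Fig.~\ref{figure:causal sufficiency} (a consequence of ancestrality and the absence of almost-directed cycles in MAGs, so the only possibilities are $\to$, $\leftarrow$, $\leftrightarrow$, or their combinations ruled in by inducing paths), and that $V_i,V_j$ remain m-connected in $\mathcal{M}$ (guaranteed since the BCD algorithm found them dependent and IOD preserves that adjacency structure).

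Granting the lemma, the theorem is immediate: $\mathcal{M}$ satisfies the criterion for $z$, hence by the ``if'' direction the marginal structure between $V_i$ and $V_j$ in $\mathcal{M}$ is precisely Fig.~\ref{figure:causal sufficiency}(z). Since $\mathcal{M}$, $V_i$, $V_j$, and $k$ were arbitrary subject to the recorded relation, this holds for all returned MAGs and all relevant triples, giving soundness. I would close by noting the one bookkeeping subtlety — step~6 may \textbf{return} a PAG rather than individual MAGs, so one must add the remark that every MAG in the Markov equivalence class of a returned PAG passes the criteria (this is exactly the condition under which the PAG, rather than its members, is returned), so the statement ``every MAG output by Algorithm~\ref{algorithm: new causal rules}'' is unambiguous.

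The main obstacle I anticipate is the ``if'' direction of the lemma, specifically verifying that the four m-separation conditions of each criterion, evaluated in a MAG over the \emph{full} vertex set $\mathcal{V}$ rather than just $\{V_i,V_j\}$, cannot be satisfied ``accidentally'' by some marginal structure other than the intended one — one has to rule out that longer active paths through other variables, or inducing paths, could conspire to mimic a different criterion's m-separation signature. Handling this cleanly requires invoking that the mutilations $\overline{V_i},\underline{V_i},\overline{V_j},\underline{V_j}$ only remove edges incident to $V_i$ or $V_j$, so any active path between them that survives must use an edge incident to the mutilated endpoint of the correct orientation type, which pins down the local structure; making that argument airtight is where the real work lies.
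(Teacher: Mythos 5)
Your proposal is correct and follows essentially the same route as the paper's own proof: the returned MAGs have passed the criterion check of steps 5--6, and the heart of the argument is that criteria 1--5 uniquely characterise the five structures of Fig.~\ref{figure:causal sufficiency}, so no pure directed or common-cause relation can silently become a direct-and-common one. You in fact supply more detail than the paper does (the explicit iff lemma, the mutual exclusivity/exhaustiveness of the five m-separation patterns for an m-connected pair, and the PAG-versus-MAG return bookkeeping), whereas the paper simply invokes IOD soundness for the conditional-independence part and asserts that the criteria distinguish the problematic cases.
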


\begin{theorem}[Completeness]
Let $\mathcal{H}$ be a MAG over variables $\mathcal{V}$. If $V_i,V_j\in \mathcal{V}_k$, and the marginalised causal structure between $V_i,V_j$ in $\mathcal{H}$ coincides with that learned from dataset $\mathcal{D}_k$ by the BCD algorithm, then $\mathcal{H}$ is one of the MAGs output by Algorithm~\ref{algorithm: new causal rules}. 
\end{theorem}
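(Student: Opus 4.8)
The plan is to obtain Completeness from two ingredients: the completeness of the original IOD algorithm, and the correctness of the graphical criteria of Table~\ref{tab:criteria} as characterisations of the structures in Fig.~\ref{figure:causal sufficiency}. Note that Algorithm~\ref{algorithm: new causal rules} differs from IOD only by (a) orienting some previously unoriented edges with the BCD oracle and recording the resulting pairwise structures in \textbf{Directed}, \textbf{Common}, \textbf{DirectedCommon}, and (b) a final pass that keeps a candidate MAG precisely when every recorded pair satisfies the criterion attached to its data structure. So it is enough to show that a MAG $\mathcal{H}$ satisfying the hypothesis (i) is still produced as a candidate in Part~2 of IOD, and (ii) is not discarded by the filtering pass; equivalently, I would argue the contrapositive: if $\mathcal{H}$ is not output, then it was either never generated---contradicting IOD completeness---or generated and then filtered out---contradicting criteria correctness.

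For (i): the hypothesis implies, in particular, that $\mathcal{H}$ is consistent with the conditional-independence information of each dataset, since each structure in Fig.~\ref{figure:causal sufficiency} refines the learned Markov equivalence class rather than contradicting it---the marginal of $\mathcal{H}$ onto $\mathcal{V}_k$ has the same adjacencies and m-separations as the PAG recovered by the FCI steps of Part~1, hence agrees with \textbf{Sepset}, and has the same inducing paths, hence with \textbf{IP}. By the completeness of IOD \cite{tillman2011learning}, every MAG consistent in this way with \textbf{Sepset} and \textbf{IP} over every $\mathcal{V}_k$ is enumerated among the candidates of Part~2, so $\mathcal{H}$ is one of them. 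The final output step of Algorithm~\ref{algorithm: new causal rules} then returns a consistent $\mathcal{H}$ either inside its PAG, if all MAGs of that PAG survive, or individually otherwise; either way $\mathcal{H}$ is output, so this bookkeeping causes no loss.

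For (ii): only one direction of a biconditional is needed, namely that the marginalised structure between a recorded pair $U,W$ in $\mathcal{H}$ being Fig.~\ref{figure:causal sufficiency}$(z)$ forces criterion $z$ to hold in $\mathcal{H}$. The argument following Table~\ref{tab:criteria} establishes this when $\mathcal{H}$ consists of the two nodes $U,W$ alone; the substantive task is to lift it to an arbitrary MAG over $\mathcal{V}$, where extra nodes may lie on paths between $U$ and $W$. The key graph lemma I would isolate is: in any MAG, whether $U$ and $W$ are m-separated after deleting every arrowhead into (resp.\ every tail out of) a node $Z\in\{U,W\}$ depends only on (a) whether $U$ is an ancestor of $W$ or conversely, and (b) whether $U,W$ are joined by an inducing path not subsumed by a directed path---and these two features are exactly the datum recorded by which of Fig.~\ref{figure:causal sufficiency}$(z)$ the $\{U,W\}$-marginal is. Granting the lemma, each entry of Table~\ref{tab:criteria} evaluates, in $\mathcal{H}$, to the value tabulated for the true structure, so every recorded pair passes and $\mathcal{H}$ survives filtering; combined with (i) this yields the theorem.

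The hard part will be this lifting lemma. Deleting all arrowheads into a node, or all tails out of it, is not a standard MAG operation and need not even return a MAG, so one cannot simply invoke known m-separation theorems; instead I would reason directly about which walks between $U$ and $W$ remain active after the mutilation, casing on whether a walk is a directed path, a bidirected common-cause connecting path, or contains a collider, and show that the surviving active walks are governed only by features (a) and (b) above. A secondary point to pin down is the alignment between the theorem's hypothesis and what the algorithm actually checks: the BCD oracle is applied only to edges still unoriented after Part~1, so ``the marginalised structure between $V_i,V_j$ in $\mathcal{H}$'' should be matched against the oracle only for pairs that remain adjacent there, with non-adjacent pairs handled entirely by the \textbf{Sepset}/IOD-completeness step of part~(i); making this split explicit is needed for the hypothesis to line up cleanly with the filtering test.
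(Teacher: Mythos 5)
Your overall route is the same as the paper's (whose proof is far terser: IOD completeness plus the remark that applying BCD ``does not change the Markov equivalence class''), but two steps in your part~(i) do not hold as written. First, the pairwise hypothesis does not imply CI-consistency of $\mathcal{H}$ with the datasets: two MAGs can agree on the marginal two-variable structure of every pair recorded by the BCD step yet differ on conditional independencies given larger sets. For example, with $\mathcal{V}_k=\{X,Y,Z\}$ and data generated by $X\to Y\to Z$ (so $X \CI Z \mid Y$, and BCD records directed structures for $(X,Y)$ and $(Y,Z)$ only), the graph $X\to Y\to Z$ with an additional edge $X\to Z$ matches every recorded pairwise structure but contradicts \textbf{Sepset} and is rightly not output. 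So your claim that the marginal of $\mathcal{H}$ onto $\mathcal{V}_k$ ``has the same adjacencies and m-separations as the Part-1 PAG'' cannot be derived from the hypothesis; CI-consistency has to be read as part of the hypothesis, which is what the paper implicitly does by invoking IOD completeness directly. Second, in Algorithm~\ref{algorithm: new causal rules} Part~2 of IOD is run on graphs already pre-oriented by BCD, which changes the \textbf{edges to remove}/\textbf{immoralities to orient} sets (this is precisely the source of the speed-up reported for Synthetic~2), so plain IOD completeness does not by itself show $\mathcal{H}$ is still enumerated. You need the further fact---the paper's second ingredient---that the pre-orientation only replaces circle marks by marks present in any MAG agreeing with the BCD output, so no MAG satisfying the hypothesis is lost at the enumeration stage.

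On part~(ii) you correctly isolate the point the paper silently assumes: that the criteria of Table~\ref{tab:criteria}, checked by m-separation (with respect to the empty set) in the mutilated \emph{full} MAG, still pick out the two-variable marginal structure when other nodes lie on paths between the pair, and you are right that edge deletion need not preserve maximality so a direct walk analysis is needed. The lemma is true and your case analysis would deliver it, but your invariant (b) should be ``a collider-free path between $U$ and $W$ that is not a directed path'' (a confounding trek through a common ancestor or a bidirected edge), not an inducing path: inducing paths govern adjacency under marginalisation, whereas m-connection given the empty set is exactly trek existence. With that fix, deleting edges into $Y$ destroys every trek except directed paths out of $Y$, deleting edges out of $Y$ destroys only those, and symmetrically for $X$, so each row of Table~\ref{tab:criteria} reduces to a Boolean combination of ``$X$ is an ancestor of $Y$'', ``$Y$ is an ancestor of $X$'', and ``a confounding trek exists'', which is exactly the datum distinguishing Fig.~\ref{figure:causal sufficiency}(a)--(e). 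As submitted, however, this key lemma is stated and sketched but not proved, so your proposal is a plan rather than a finished proof---though on this point it is more explicit about what actually needs proving than the paper's own two-sentence argument.
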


% \begin{figure}[t]   
% \begin{subfigure}
%     \centering
%     (a)\includegraphics[scale=.2]{}
%     \label{fig:synth_master_1}
% \end{subfigure}
% \begin{subfigure}
%     \centering
%     (b) \includegraphics[scale=0.2]{}
%     \label{fig:local1_synth_1}
% \end{subfigure}
% \centering
%     \begin{subfigure}
%     (c) \includegraphics[scale=0.2]{}
%     \label{fig:local2_synth_1}
% \end{subfigure}
% \caption{(a) The true graph for Synthetic 1 and (b)(c) the local graphs after applying BCD}\label{figure:synth_1}
% \end{figure} 

\section{Experiments} \label{Section: experiments}
%\subfile{Experiments}
We now compare Algorithm~\ref{algorithm: new causal rules}, which we refer to as Causal IOD, to normal IOD. Additionally, to determine whether criteria 1-5 offer improvement over a straightforward application of bivariate causal discovery directly after part 1 of IOD, we also compare performance to a modified version of Algorithm~\ref{algorithm: new causal rules}, in which steps 3 and 5-7 are omitted. We refer to this as IOD with bivariate causal discovery (IOD$+$BCD). As remarked in Section~\ref{Section: introduction}, due to the reliance on conditional independence, IOD struggles when both the number of overlapping variables and the number of variables in each individual dataset are small. 
We verify this with the experiments and show that our method alleviates this issue.
We also test and compare our method in the high overlap, high number of variables regime.
The experiments are:
\begin{itemize}
    \item Synthetic 1 with low number of total variables and a small overlap set.
    \item Synthetic 2 with large number of total variables but low number of overlap variables.
    \item Protein dataset with high number of variables and a high overlap set.
    \item Cancer dataset with low number of variables with small overlap.
    \item Comparison of algorithms as a function of overlap on random graphs.
    \item Analysis of effect of sample size on Causal IOD.
\end{itemize}

For comparison, we measure the number of consistent MAGs output by each algorithm.
This is motivated by the fact that given certain CI and causal information, we desire the smallest number number of consistent MAGs to reduce the number of possible solutions that need to be verified experimentally or against domain experts to obtain the true data generating causal graph.
Note that for all the algorithms to output the true graph, the CI and BCD tests must find the correct information.
Thus for experiments with access to the true graph, we also measure two extra metrics: precision P $\equiv$ (No. of edges in MAGs with orientation as in ground truth / No. of edges in MAGs) and recall R $\equiv$ (No. of edges in MAGs with orientations as in ground truth / (No. of edges in ground truth $\times$ No. of MAGs).
Both are 1 when the only output is the true data generating MAG.
 
%We also compare the number of outputs on randomly generated graphs where the number of overlapping variables was varied. This was done to determine how performance varies across all algorithms as a function of the overlap size. 

For the synthetic experiments, the partition of variables were chosen so that the local graphs satisfy causal sufficiency with respect to KCDC \cite{mitrovic2018causal}. This was done in order to compare performance when criteria 1-5 can be safely applied with KCDC as causal discovery. The functional relationships for the synthetic experiments are outlined in Appendix. Sample sizes of 3000 were used. HSIC \cite{gretton2008kernel} was used for marginal independence tests and KCIT
\cite{zhang2011kernel} for conditional independence.
The above algorithms used the RBF kernel with median data distance as the scale. Results are presented in Table~\ref{tab:results}.

%Due to the random partitions of the graph, a local dataset may contain latent confounders that are part of one of the other datasets. 

\begin{figure}[t]   
\centering
\begin{subfigure}
    \centering
    (a)\includegraphics[scale=.1]{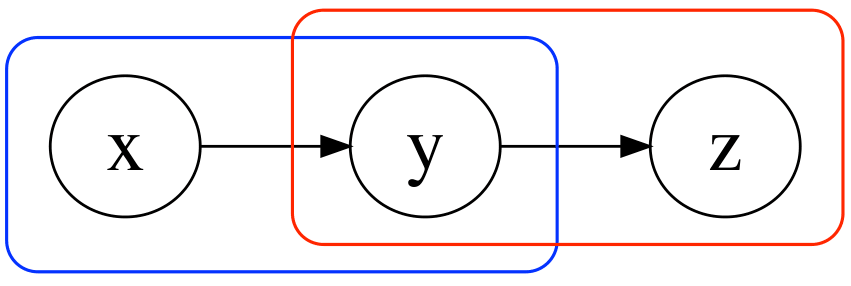} \\
    \label{fig:synth_master_2}
\end{subfigure}
\begin{subfigure}
    \centering
    (b)\includegraphics[scale=.1]{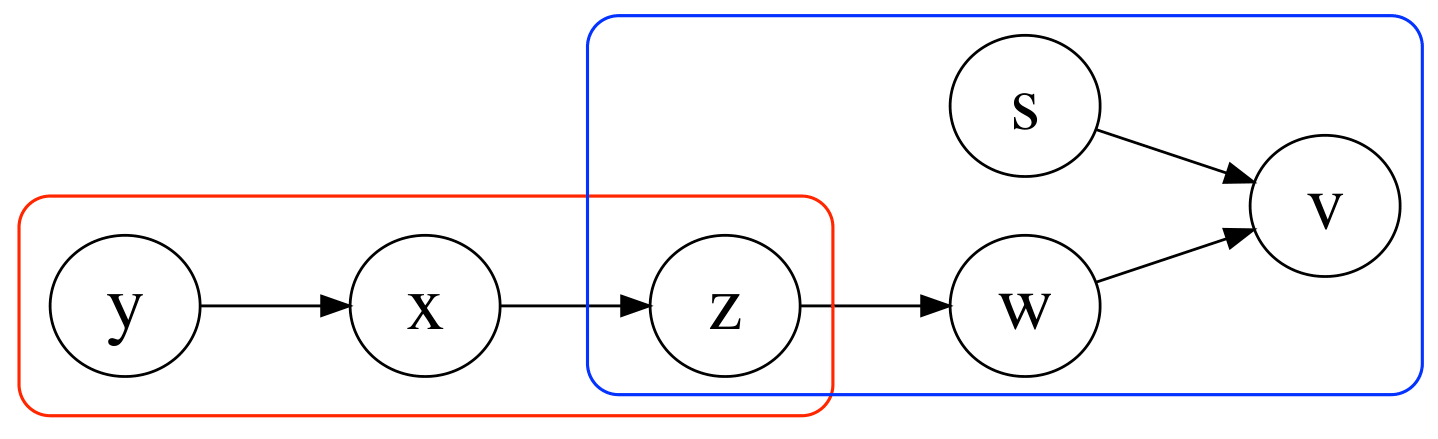}
    \label{fig:synth_master_1}
\end{subfigure}
\caption{The data generating graphs for experiments (a) Synthetic 1 \& (b) Synthetic 2. Boxes represent the node splits for the individual datasets.}\label{figure:synthetic}
\end{figure}

% \begin{figure}[t]   
% \begin{subfigure}
%     (a)\includegraphics[scale=.17]{}
%     \label{fig:protein_local_1}
% \end{subfigure}
% \begin{subfigure}
%     (b) \includegraphics[scale=0.17]{}
%     \label{fig:protein_local_1}
% \end{subfigure}
% \begin{subfigure}
%     (c)\includegraphics[scale=.17]{}
%     \label{fig:cancer_local_1}
% \end{subfigure}
% \begin{subfigure}
%     (d) \includegraphics[scale=0.17]{}
%     \label{fig:cancer_local_1}
% \end{subfigure}
% \caption{Local causal graphs of the Protein (a),(b) and the Cancer (c),(d) datasets.}\label{figure:real_data}
% \end{figure}

% \begin{figure}[t]
%     \centering
%     \includegraphics[scale=0.15]{Average_Overlap.png}
%     \caption{Average output of the algorithms on random graphs as a function of overlap variables.}
%     \label{fig:overlap_exp}
% \end{figure}

 \textbf{Synthetic 1.} 
Data is sampled from a graphical structure, in the low variable regime, with three variables as shown in Fig. \ref{figure:synthetic}(a). This was split into two datasets with variable $Y$ as the overlap.
%As there are no latent variables in Fig.~\ref{figure:synth_1}(a), we can assume causal sufficiency and hence employ criteria 1-5 without loss of generality. Thus IOD can be compared with causal IOD. 
%This example demonstrates the deficiency of the normal IOD algorithm in the regime where the number of variables in each dataset is low. 
The number of consistent MAGs, P and R scores are shown in Table~\ref{tab:results}.
The IOD algorithm returns the most number of MAGs and has the lowest P and R scores. 
This is due to its sole reliance on CI information to find consistent MAGs.
Any graph that encodes correlations between $X,Y$ and $Y,Z$ is consistent with CI information in the local graphs.
Since any causal direction or a common cause between the variables encodes correlations, the IOD algorithm returns every possible MAG between the three variables.
This is undesirable as it does not ascertain any information about the causal relationship between the variables that were not measured jointly.
IOD$+$BCD reduces the number of MAGs and results in an increase in the P and R scores.
This is due to the fact that, as a consequence of the arrows $X \rightarrow Y$ and $Y \rightarrow Z$ being oriented, certain orientations can be ruled out.
For example, it is no longer possible to have a MAG with an arrow $Z \rightarrow X$ as this results in a cyclic graph.
Causal IOD performs the best as it rules out
any candidate MAG that would violate the m-separations of criterion 1 for $X \rightarrow Y$ and $Y \rightarrow Z$.
For example, an arrow $X \rightarrow Z$ would now be invalid as it violates criterion 1 for $Y \rightarrow Z$.

\textbf{Synthetic 2.} Data is sampled from a graphical structure in Fig.~\ref{figure:synthetic}(b) of six variables that was split into two datasets with only one variable $Z$ as the overlap.  
%Again, as there are no latent variables, we can assume causal sufficiency. 
In this low overlap regime, the computation of IOD proved intractable.
This was due to the fact that the small overlap in this case created too many edges to remove and immoralities to orient (see Section 2 in Appendix, here first iteration of \textbf{immoralities to orient} is over the powerset of a set of length 29 for IOD and only 8 for Causal IOD). As the bivariate causal discovery orients all edges in the local graphs, it substantially reduces the number of immoralities to consider. 
This is because immoralities can now only be oriented with edges that have an arrow.
This reduces the size of the largest set of \textbf{immoralities to orient} $M$, saving considerable compute time. 
The results in Table~\ref{tab:results} show that the causal IOD returns the correct graph while IOD$+$BCD results in multiple answers. The reason causal IOD performs well is due to the fact that all variables have pairwise directed causes amongst them, as seen in Fig. \ref{figure:synthetic}(b).
This rules out any graph with a backdoor path between these variables, even if the graph respects the CI information of the data.  %This experiment further illustrates the importance of storing the causal information instead of just applying the bivariate causal discovery.

\textbf{Overlap experiment.} The three algorithms are now compared on randomly generated graphs of six nodes created using the process described in \cite{melancon2000random}.
With this experiment, we wish to attribute any difference in performance to both the incorporation of bivariate causal information and to criteria 1-5, rather than any difference in implementation of bivariate causal discovery algorithm or conditional independence test. 
Thus we give all three algorithms access to the true conditional independence information and true bivariate causal discovery outcomes in an oracle fashion.
This places all algorithms on equal footing, i.e. any difference in performance is due to the algorithm itself and not to any imperfect implementation of CI or causal discovery test.
The number of overlap variables are varied and the number of resulting consistent MAGs counted.
The choice of overlap variables was fixed before the graph generation to ensure that the overlap was random.
The global graph was generated and then marginalised into two separate local graphs.
%The algorithms were then run from part 2 of the IOD onwards.
This is repeated with 20 random graphs, and results averaged. The results can be seen in Table.~\ref{tab:results}.
Causal information vastly decreases the number of consistent answers as the number of overlapping variables grows.

\begin{table*}[t]
\scriptsize
    \caption{Results} \label{tab:results}
    \centering
    \begin{tabular}{cccccc}
    \toprule
    \multicolumn{1}{c}{Experiment} & \multicolumn{1}{c}{Overlap/Total variables} & \multicolumn{1}{c}{Algorithms}  & \multicolumn{1}{c}{Number of consistent MAGs} & \multicolumn{1}{c}{P} & \multicolumn{1}{c}{R} \\
    \midrule
    \multirow{ 3}{*}{Synthetic 1} & \multirow{ 3}{*}{1/3} & IOD  & 63 & 0.34 & 0.53 \\
    \multicolumn{1}{c}{} & & IOD + BCD  & 8 & 0.58 & 0.69 \\
    \multicolumn{1}{c}{} & & Causal IOD  & \textbf{1} & \textbf{1.00} & \textbf{1.00} \\
    \midrule
    \multirow{ 3}{*}{Synthetic 2} & \multirow{ 3}{*}{1/6} & IOD  & -  & - & -\\
    \multicolumn{1}{c}{} & & IOD + BCD  & 169 & 0.40 & 0.54 \\
    \multicolumn{1}{c}{} & & Causal IOD & \textbf{1} & \textbf{1.00} & \textbf{1.00}  \\
    \midrule    
    \multirow{ 9}{*}{Overlap} & \multirow{ 3}{*}{2/6} & IOD & 650.5 & 0.73 & 0.84 \\
    \multicolumn{1}{c}{} & & IOD + BCD  & 27.0 & 0.91 & 0.89  \\
    \multicolumn{1}{c}{} & & Causal IOD  & \textbf{15.9} &  \textbf{0.93} & \textbf{0.90} \\
    %\hdashline
    \cdashline{2-6}
     & \multirow{ 3}{*}{3/6} & IOD & 402.0 & 0.76 & 0.87      \\
    \multicolumn{1}{c}{} & & IOD + BCD  & 5.8 &  \textbf{0.96} & 0.94\\
    \multicolumn{1}{c}{} & & Causal IOD  & \textbf{5.3} & \textbf{0.96} & \textbf{0.95} \\
    %\hdashline
    \cdashline{2-6}
     & \multirow{ 3}{*}{4/6} & IOD & 230.5 & 0.77 & 0.91 \\
    \multicolumn{1}{c}{} & & IOD + BCD  & 1.9 &  \textbf{0.98} & 0.98 \\
    \multicolumn{1}{c}{} & & Causal IOD  & \textbf{1.8} &  \textbf{0.98} & \textbf{0.99} \\
    \midrule
    \multirow{ 6}{*}{Sample size} & \multirow{ 6}{*}{9/11} & IOD & 275 & 0.68 & 0.82 \\
    \multicolumn{1}{c}{} & & Causal IOD (200 samples)  & 2 & 0.78 & 0.88 \\
    \multicolumn{1}{c}{} & & Causal IOD (400 samples)  & 1 & 1.00 & 1.00 \\
    \multicolumn{1}{c}{} & & Causal IOD (600 samples)  & 1 & 1.00 & 1.00 \\    
    \multicolumn{1}{c}{} & & Causal IOD (800 samples)  & 1 & 1.00 & 1.00 \\
    \multicolumn{1}{c}{} & & Causal IOD (1000 samples)  & 1 & 1.00 & 1.00 \\
    \midrule
    \multirow{ 3}{*}{Protein} & \multirow{ 3}{*}{9/11} & IOD  & 61740 & 0.40 & 0.30\\
    \multicolumn{1}{c}{} & & IOD + BCD & 13  & \textbf{0.50}
    & \textbf{0.37} \\
    \multicolumn{1}{c}{} & & Causal IOD & \textbf{3} & \textbf{0.50} & \textbf{0.37} \\
    \midrule
    \multirow{ 3}{*}{Cancer} & \multirow{ 3}{*}{1/3} & IOD  & 42 \\
    \multicolumn{1}{c}{} & & IOD + BCD  & \textbf{6} \\
    \multicolumn{1}{c}{} & & Causal IOD  & \textbf{6} \\
    \hline
    \end{tabular}
\end{table*}

\textbf{Sample size.} We took data generated from a random graph of 7 variables and split it into two datasets with an overlap of 5 variables.
We then extracted the true conditional independence information from the two graphs (by checking m-separations).
This was so that both the algorithms had access to any CI information it needed---it also meant that the IOD algorithm had all the information it needed.
This provides a comparison of how sensitive Causal IOD is to sample size when compared against perfect implementation of IOD.
Data with varying sample sizes was sampled from the graph and the Causal IOD algorithm run.
The results show that with a small sample size of 200, KCDC  gives imperfect information resulting in a lower P and R score. However this is still higher than the IOD algorithm with access to all the information it needs. On all other sample sizes causal IOD achieves perfect performance. 

\textbf{Protein.} Next, the algorithms are compared on the Sachs et al. protein dataset.
The ground truth graph was taken from \cite{sachs2005causal}.
Here, Sachs et al. perturbed different proteins, observing the responses of other proteins.
Note the low P and R scores for this example is due to the CI tests declaring some variables as independent, that are not independent in literature.
This was noticed as well in \cite{goudet2017learning}, where they gave their algorithm access to the skeleton of the ground truth graph (hence the correct edges).
The incorrect edges found by the CI test then also affects the outcome of the BCD algorithm.
The similar P and R scores of the IOD+BCD and Causal IOD can also be attributed to this.
%Again, in order to apply criteria 1-5, we give causal IOD oracle access to the true joint causal structure. 

%Due to the experimental nature of the observations in \cite{sachs2005causal}, we can assume the causal information to be causally sufficient.
%For this example, taking into account the causal information shows improvement over just using conditional independence information.

\textbf{Breast cancer.} We test on Breast Cancer data,\footnote{https://archive.ics.uci.edu/ml/datasets/Breast+Cancer
+Wisconsin+(Diagnostic)}
containing 10 features that describe the cell nucleus present in an image of a breast mass.
The images are associated with breast cancer diagnosis (Malignant or Benign).
Three variables---Diagnosis, Perimeter, \& Texture---were chosen and partitioned into two datasets with Diagnosis as the overlapping variable.
One might argue that causal sufficiency can be assumed as we do not expect Diagnosis and a physical feature to be confounded by the other physical feature. However, causal IOD \& IOD$+$BCD yield the same answers here, hence criteria 1-5 provide no advantage in this case. Note that no ground truth graph exists here.

\section{Conclusion}\label{Section: conclusion}
Here, we devised a new sound and complete algorithm for discovering causal structure from overlapping datasets using bivariate causal discovery. Our approach resulted in fewer MAGs than the current state-of-the-art algorithm, IOD \cite{tillman2011learning}---even when both the number of overlapping variables and the number of variables in each dataset were small. This smaller set of MAG makes it easier for domain experts to find the true graph, or to suggest further experiments to find it. % the correct causal graph.

%These fewer MAGs make is easier to suggest further experiments or make verification against domain experts easier to find the true causal graph.

Other approaches to integrating overlapping datasets use SAT solvers to check
is a candidate solution is consistent with conditional independences \cite{tsamardinos2012towards,triantafillou2010learning}. Future work will combine with bivariate causal discovery.% approach taken here.

The main observation of our work is that local causal structure limits global structure. %In this sense causal information is more useful than statistical information, as small amounts of it can limit what can be observed in the world. 
This is reminiscent of monogamy relations studied in quantum cryptography \cite{lee2018towards}, where local causal information limits how well eavesdroppers can intercept communications. Inspired by this, extensions to the growing field of quantum causal models \cite{allen2017quantum,lee2018towards,lee2018certification,chaves2015information} will be investigated. 
  
\bibliography{references} 
\bibliographystyle{ieeetr} 

%\section*{Supplementary Material}\label{Section:appendix}

\appendix
\section{Background information} \label{Section: background information}

We define a mixed graph $\mathcal{G} = \big< \mathcal{V}, \mathcal{E}\big>$, with vertices $\mathcal{V}$ and edges $\mathcal{E}$, as a graph containing three types of edges:
directed $\rightarrow$, undirected $-$, and bidirected $\leftrightarrow$.
Two nodes that share an edge are \textit{adjacent}.
A \textit{path} is defined as a sequence of nodes $ \langle V_1 ... V_i ... V_n \rangle$ such that $V_i$ and $V_{i+1}$ are adjacent for all $i$ and no node is repeated in the sequence.
A path is \textit{directed} if it follows the direction of the arrows. $X$ is an ancestor of $Y$ if there exists a directed path from $X$ to $Y$. 
$Y$ is then referred to as a descendent of $X$. An ancestor is a \emph{parent} if there are no intermediate nodes in the path, the direct descendent is then a \emph{child}. 
In graph $\mathcal{G}$, the ancestors of $X$ are denoted $\mathbf{Anc}_{\mathcal{G}}^{X}$, and descendents $\mathbf{Des}_{\mathcal{G}}^{X}$.  
%If there are no intermediate nodes in a directed path from $X$ to $Y$, $X$ is simply referred to as the \textit{parent} and $Y$ as the \textit{child}.

A path is a \textit{collider} $\langle V_{i-1}, V_i, V_{i+1} \rangle$ if $V_{i-1}$ and $V_{i+1}$ both have a directed edge pointed at $V_i$.
A collider is then an \textit{immorality} if $V_{i-1}$ and $V_{i+1}$ are not adjacent.
A path between $X, Y$ is \textit{active} with respect to a set of nodes $\mathbf{Z}$ in graph $\mathcal{G}$ with $\{X,Y\} \notin \mathbf{Z}$, if:
(1) $\langle V_{i-1}, V_i, V_{i+1} \rangle$ is a collider in the path then $\{\{V_i\} \cup \mathbf{Des}^{V_i}_{\mathcal{G}} \} \cap \mathbf{Z} \neq \emptyset$, and
(2) If $V_i \in \mathbf{Z}$ then $\langle V_{i-1}, V_i, V_{i+1} \rangle$ is a collider.
%1) $\{\{V_i\} \cup \mathbf{Des}^{V_i}_{\mathcal{G}} \} \cap \mathbf{Z} \neq \emptyset$ if $\langle V_{i-1}, V_i, V_{i+1} \rangle$ is a collider in the path, and 2) $\langle V_{i-1}, V_i, V_{i+1} \rangle$ is a collider if $V_i \in \mathbf{Z}$.
In a graph $\mathcal{G}$, two nodes are \textit{m-separated} given $\mathbf{Z}$ if there does not exist an active path between them with respect to $\mathbf{Z}$, denoted $\mathbf{msep}_{\mathcal{G}}(X, Y| \mathbf{Z})$.
Closely related to m-separation is the graph concept of \textit{inducing paths}. An inducing path between nodes $X$, $Y$ relative to $\mathbf{Z}$ in a graph $\mathcal{G}$ is a path $\langle X, ... V_i, ... Y \rangle$ such that: (1) If $V_i \notin \mathbf{Z}$ then $\langle V_{i-1}, V_i, V_{i+1}\rangle$ is collider, and
(2) If $\langle V_{i-1}, V_i, V_{i+1}\rangle$ is a collider then $V_i \in \mathbf{Anc}^{X}_{\mathcal{G}} \bigcup  \mathbf{Anc}^{Y}_{\mathcal{G}}$.
If there is an inducing path between nodes, they cannot be m-separated.

A \textit{maximal ancestral graph} (MAG) $\mathcal{G} = \langle \mathcal{V}, \mathcal{E} \rangle$ is a mixed graph that is:
(1) \textbf{ancestral}: The graph is acyclic and does not have arrows pointing into nodes with an undirected edge ($X - Y$), (2) \textbf{maximal}: For any distinct nodes $V_i, V_j \in \mathcal{V}$, if $V_i, V_j$ are not adjacent in $\mathcal{G}$, then $\mathcal{G}$ does not contain any inducing paths between them with respect to the empty set. A DAG is just a MAG with directed edges. In addition to the independences encoded by a DAG, MAGs allow for latent variables that may be confounders---using a bidirected edge, or selection variables---using an undirected edge.

In this work we assume \textit{faithfulness} holds. That is, a MAG encodes an m-separation $\mathbf{msep}_{\mathcal{G}}(X, Y|\mathbf{Z})$ if and only if there exists a probability distribution on ${\mathcal{G}}$, $\mathbb{P}_{\mathcal{G}}$, in which $X$ is independent of $Y$ given $\mathbf{Z}$, denoted as $X \CI Y | \mathbf{Z}$.
There will usually be more than one MAG that can encode the same conditional independence information of a distribution $\mathbb{P}$.
Such MAGs are said to be \textit{Markov equivalent} and belong to the same \textit{Markov equivalence class}.
Two MAGs, $\mathcal{G} = \langle \mathcal{V}, \mathcal{E} \rangle$, and $\mathcal{H} = \langle \mathcal{W}, \mathcal{F} \rangle$ are Markov equivalent if they contain the same adjacencies, immoralities, and discriminating paths (defined in \cite{Zhang:2007:CME:3020488.3020543}).
If $\mathcal{W} \subset \mathcal{V}$, $\mathcal{H}$ is said to be a \emph{marginal} of $\mathcal{G}$ if the following holds for every $X, Y \in \mathcal{W}$: (1) If nodes $X$ and $Y$ are adjacent in $\mathcal{H}$, then they must have an inducing path in $\mathcal{G}$ with respect to $\mathcal{V} \backslash \mathcal{W}$, (2) For every \textbf{Z} that m-separates $X$ and $Y$ in $\mathcal{H}$, $X$ and $Y$ are also  m-separated by \textbf{Z} in $\mathcal{G}$.

%the two MAGs are Markov equivalent if the following hold for every $X, Y \in \mathcal{W}$ : 1) If $X$ and $Y$ are adjacent in $\mathcal{H}$, then there must be an inducing path in $\mathcal{G}$ with respect to $\mathcal{V} \backslash \mathcal{W}$, 2) For every \textbf{Z} that m-separates $X$ and $Y$ in $\mathcal{H}$, $X$ and $Y$ are also  m-separated by \textbf{Z} in $\mathcal{G}$.

%If $X$ and $Y$ are m-separated by \textbf{Z} in $\mathcal{H}$, then this must also hold in $\mathcal{G}$ for every \textbf{Z}.

We follow the \emph{partial ancestral graph} (PAG) convention for the graphical representation of Markov equivalent MAGs \cite{Zhang:2007:CME:3020488.3020543}.
Here, an edge type is considered invariant if all the MAGs in the Markov equivalence class have the same edge.
An arrowhead and tail are only represented in the PAG if they are invariant in the entire class.
The rest of the edges are represented by a circle $\circ$ symbolising that there are at least two MAGs in the Markov equivalence class with different edge types between the same variables.

\section{IOD algorithm: Complexity of loops} 
The IOD algorithm requires a nested loop over the powerset of sets \textbf{edges to remove} and within it, iterations over the powerset of \textbf{immoralities to orient}.
For an \textbf{edges to remove} set of size $n$ and, within it, an \textbf{immoralities to orient} set of size $m$, this step yields a complexity
of $O(2^{n+m})$.
However, as the size of \textbf{immoralities to orient} changes in each iteration of \textbf{edges to remove}, the complexity is proportional to the size of the largest \textbf{immoralities to orient} set $M$. Thus the complexity is $O(2^{n+M})$.

\section{Proofs of Soundness and Completeness}

Algorithm 1 from our main paper is \emph{sound} in that each returned MAG has the same marginal structure between variables as that learned from the datasets, and \emph{complete} in that if a MAG exists with the same marginal structure between all variables, it is returned by Algorithm 1.

\begin{theorem}[Soundness]
Let $V_i,V_j\in D_k$ be variables in the overlapping dataset $D_k$. If the marginal causal structure between $V_i,V_j$ learned from $ D_k$ is Fig.~\ref{figure:causal sufficiency1}(z), for $z\in\{a,b,c,d,e\}$, then it is also the marginal structure between $V_i,V_j$ in every MAG output by Algorithm 1, for all $i,j,k$. 
\end{theorem}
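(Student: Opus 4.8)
The plan is to follow a candidate MAG through Algorithm~\ref{algorithm: new causal rules} and show that the final filtering step forces the required marginal structure. Fix a dataset $D_k$ and a dependent pair $V_i,V_j\in D_k$, and suppose the BCD algorithm reports Fig.~\ref{figure:causal sufficiency}$(z)$ for this pair. The first step is pure bookkeeping: by steps~2--3 of Algorithm~\ref{algorithm: new causal rules}, the pair---ordered so as to record the direction of any directed part---is inserted into \textbf{Directed} if $z\in\{a,b\}$, into \textbf{Common} if $z=c$, and into \textbf{DirectedCommon} if $z\in\{d,e\}$, and the Table~\ref{tab:criteria} criterion attached to that data structure and ordering, which I will call $C_z$, is exactly the one that step~5 tests for this pair. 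The second step is equally immediate: by steps~5--7 a MAG is returned (on its own, or as a member of a returned PAG all of whose MAGs passed) only if for \emph{every} pair in the three data structures the attached criterion holds, evaluated through the presence or absence of active paths in the relevant mutilated copies of the MAG. Hence every output MAG $\mathcal{H}$ satisfies $C_z$ for $V_i,V_j$.

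It then remains to show the core implication: if a MAG $\mathcal{H}$ over $\mathcal{V}\supseteq\{V_i,V_j\}$, with $V_i,V_j$ dependent, satisfies $C_z$ for $V_i,V_j$, then the marginal causal structure between $V_i$ and $V_j$ in $\mathcal{H}$---the structure obtained by marginalising $\mathcal{H}$ onto $\{V_i,V_j\}$, which since $V_i,V_j$ are dependent is one of the five structures of Fig.~\ref{figure:causal sufficiency}---equals Fig.~\ref{figure:causal sufficiency}$(z)$. The main text already supplies the base case: on a graph consisting of the two correlated nodes alone, the four m-separation conditions making up a criterion form a 4-tuple of truth values that is distinct for each of the five criteria, so each criterion is met by exactly one of the five structures (the paragraph beginning ``To see that each criterion uniquely picks out one structure''). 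To lift this to $\mathcal{H}$ I would prove a marginalisation-invariance lemma: for $Z\in\{V_i,V_j\}$, the nodes $V_i$ and $V_j$ are m-separated given $\emptyset$ in the mutilated full graph $\overline{\mathcal{H}}^{\,Z}$ (resp.\ $\underline{\mathcal{H}}^{\,Z}$) if and only if they are m-separated given $\emptyset$ in the correspondingly mutilated two-node marginal. Granting this, $\mathcal{H}$ satisfies $C_z$ iff its two-node marginal satisfies $C_z$, and by the base case that marginal is then Fig.~\ref{figure:causal sufficiency}$(z)$; since $D_k$ and $V_i,V_j$ were arbitrary the theorem follows.

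I expect the marginalisation-invariance lemma to be the real obstacle. The idea is that, since $\mathcal{H}$ is maximal, an active $V_i$--$V_j$ path relative to $\emptyset$ exists iff an inducing path does, and such a path collapses, under marginalising out $\mathcal{V}\setminus\{V_i,V_j\}$ (and any latents), to an edge whose orientation is dictated by the ancestral and confounding relations between $V_i$ and $V_j$; mutilation is applied only at the endpoints $V_i,V_j$, so it deletes exactly the endpoint-incident edges and commutes with this collapse---for instance, deleting all edges into $V_j$ in $\mathcal{H}$ leaves $V_i$ and $V_j$ connected precisely when the marginal still retains an edge not oriented into $V_j$, and symmetrically for the other three mutilations. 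Carrying this through requires careful handling of colliders and their descendant sets along the relevant paths, and of the interplay between marginalisation and ancestry, but all the needed machinery (m-separation, inducing paths, and the marginal-MAG conditions) is assembled in the Appendix, so the argument is routine if lengthy.
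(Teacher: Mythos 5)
Your proposal is correct, and at its core it rests on the same fact the paper's own proof leans on: each criterion in Table~\ref{tab:criteria} is satisfied by exactly one of the five structures of Fig.~\ref{figure:causal sufficiency}, and step~5 of Algorithm~\ref{algorithm: new causal rules} only returns MAGs that pass the criterion recorded for each jointly measured pair. The organisation differs, though. The paper first invokes soundness of IOD \cite[Theorem 5.1]{tillman2011learning} for the conditional-independence part, then argues informally that the only possible discrepancy is a purely directed or purely confounded pair becoming simultaneously direct-and-common once connections among non-jointly-measured variables are added, and dismisses this because criteria 1--5 distinguish those cases; you bypass the appeal to IOD soundness entirely and reduce the theorem to an explicit lemma---that satisfying criterion $C_z$ in the \emph{full} candidate MAG is equivalent to the two-variable marginal being structure $(z)$---which the paper leaves implicit and asserted rather than proved. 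That lemma is true and your plan goes through, but the inducing-path/maximality detour you sketch is unnecessary and slightly misstated (maximality relates adjacency to inducing paths, not to m-connection given the empty set); the cleaner route is to observe that a path m-connecting $V_i$ and $V_j$ given $\emptyset$ in an ancestral graph is collider-free, hence a trek, so after deleting edges with an arrowhead at $V_j$ the pair is m-connected iff $V_j$ is an ancestor of $V_i$, while after deleting edges with a tail at $V_j$ it is m-connected iff $V_i$ is an ancestor of $V_j$ or there is a trek with arrowheads at both endpoints (a confounding path); these four ancestry/confounding facts are exactly what separate structures (a)--(e), yielding the lemma directly. In short, your route buys a self-contained and more rigorous argument that makes explicit what the criteria must do when evaluated on the full MAG rather than a two-node graph, whereas the paper's version buys brevity by citing IOD soundness (not actually needed for this particular statement) and leaving the key equivalence as an assertion.
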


\begin{figure}[t]
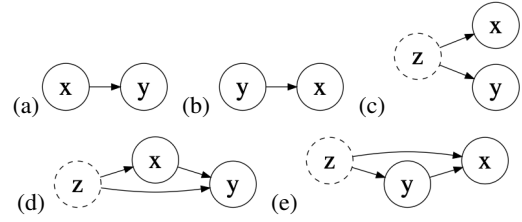
 
\centering
\begin{subfigure}
\centering
(a)\includegraphics[scale=.24]{directed1.png}
\end{subfigure}
\begin{subfigure}
\centering
(b) \includegraphics[scale=0.24]{directed2.png}
\end{subfigure}
\centering
\begin{subfigure}
(c) \includegraphics[scale=0.24]{common.png} \\
\end{subfigure}
\centering
\begin{subfigure}
(d) \includegraphics[scale=0.24]{commondirected1.png}
\end{subfigure}
\centering
\begin{subfigure}
(e) \includegraphics[scale=0.24]{commondirected2.png}
\end{subfigure}
\caption{All causal structures between 2 correlated variables, solid nodes observed \& dashed latent.}\label{figure:causal sufficiency1}
\end{figure}

\begin{proof}[\textbf{Proof of theorem 1}]
First, note that the IOD algorithm is sound \cite[Theorem 5.1]{tillman2011learning}, in that all MAGs output by IOD have the same conditional independence information as learned from $\mathcal{D}=\{D_1,\dots, D_n\}$. All that remains to check is whether a solution output by Algorithm 1 has the same marginal causal structure between each pair of jointly measured variables as learned from $\mathcal{D}$. The only situations that pose a potential problem are structures that are initial purely directed (Fig.~\ref{figure:causal sufficiency1} (a) or (b)) or common cause (Fig.~\ref{figure:causal sufficiency1} (c)), as they could become simultaneous direct-and-common (Fig.~\ref{figure:causal sufficiency1} (d) or (e)) when causal connections between non-jointly measured variables are added in part 2 of IOD. However, as criteria 1-5 can distinguish all of these cases, this problem does not arise.
\end{proof}

\begin{theorem}[Completeness]
Let $\mathcal{H}$ be a MAG over variables $\mathcal{V}$. If $V_i,V_j\in \mathcal{V}_k$ for $ 1 \leq k \leq n$ and the marginalised causal structure between $V_i,V_j$ in $\mathcal{H}$ coincides with that learned from $\mathcal{D}_k$, then $\mathcal{H}$ is one of the MAGs output by Algorithm 1. 
\end{theorem}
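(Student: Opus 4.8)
The plan is to split the statement into two checks that together characterise the output of Algorithm~1: that $\mathcal{H}$ is one of the MAGs enumerated during part~2 of IOD as invoked inside Algorithm~1, and that $\mathcal{H}$ is not rejected by the consistency test against criteria~1--5 in steps 5--7. Since Algorithm~1 returns exactly the MAGs that survive both stages, establishing these two facts finishes the proof.

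For the first check I would appeal to the completeness of IOD \cite{tillman2011learning}: every MAG over $\mathcal{V}$ that is consistent with the information recorded in \textbf{Sepset} and \textbf{IP} during part~1 is produced by the nested powerset iteration of part~2. It then suffices to note that $\mathcal{H}$ is consistent with this information, which follows from the hypothesis --- the locally learned causal structure on $\mathcal{V}_k$ encodes exactly the conditional independences that part~1 records in \textbf{Sepset} and \textbf{IP}, and $\mathcal{H}$ reproduces that local structure. The one extra ingredient in Algorithm~1 is that step~2 pre-orients some edges of the working graphs according to the BCD output; this merely intersects the enumerated set with the MAGs whose marginal edge marks agree with that output, and $\mathcal{H}$ agrees by hypothesis, so it is still enumerated.

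For the second check I would use the characterisation behind Table~1: in any MAG and for any ordered pair of its vertices, criterion~$z$ holds for that pair precisely when the marginal causal structure between them is Fig.~\ref{figure:causal sufficiency1}($z$). Granting this, the argument is immediate. Each pair $(V_i,V_j)$ deposited in \textbf{Directed}, \textbf{Common}, or \textbf{DirectedCommon} carries the structure Fig.~\ref{figure:causal sufficiency1}($z$) returned by the BCD algorithm, which by hypothesis equals the marginal structure of $\mathcal{H}$ between $V_i$ and $V_j$; hence criterion~$z$ holds for that pair in $\mathcal{H}$, the check in steps 5--7 passes for every stored pair, and $\mathcal{H}$ is returned.

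The main obstacle is the characterisation lemma used in the second check, because the criteria are evaluated on \emph{mutilations of the whole MAG} $\mathcal{H}$ rather than on the two-vertex marginal, which cannot even distinguish Fig.~\ref{figure:causal sufficiency1}(a) from (d). I would prove it by a case analysis on the route structure between $V_i$ and $V_j$ in an ancestral graph --- a directed route, a collider-free non-directed confounding route, both, or neither --- checking that deleting the edges into, respectively out of, $V_i$ or $V_j$ destroys exactly the intended connection while leaving the others m-connected; the acyclicity of MAGs, the ban on arrowheads into undirected endpoints, and the active-path definition recalled in the Appendix are what make the four m-separation tests of each criterion pin down a single structure. A secondary, more routine obstacle is checking that the pre-orientation in step~2 does not spoil IOD's completeness, which I expect to reduce to the observation that part~2's powerset search over edges to remove and immoralities to orient still reaches every MAG compatible with the imposed orientations, $\mathcal{H}$ among them.
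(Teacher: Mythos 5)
Your proposal takes essentially the same route as the paper's proof: rely on the completeness of IOD (Theorem 5.2 of Tillman and Spirtes) to guarantee that $\mathcal{H}$ is among the enumerated candidates, argue that the BCD pre-orientation in step 2 cannot exclude it (the paper phrases this as bivariate causal discovery not changing the Markov equivalence class of a MAG), and conclude that $\mathcal{H}$ is returned. You are in fact more explicit than the published two-sentence argument, which never spells out that $\mathcal{H}$ survives the criteria filter of steps 5--7, whereas you isolate the needed characterisation of Table 1 (criterion $z$ holds in the mutilated full MAG exactly when the marginal structure is the corresponding figure) and sketch its case analysis; this is a refinement of, not a departure from, the paper's approach.
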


\begin{proof}[\textbf{Proof of theorem 2}]
Note that the IOD algorithm is complete \cite[Theorem 5.2]{tillman2011learning}, meaning all PAGs with the same conditional independence information as $\{D_1,\dots, D_n\}$ are output by IOD. Note also that applying bivariate causal discovery to a MAG does not change the Markov equivalence class it belongs to. The conjunction of these two facts implies Algorithm 1 is complete.
\end{proof}

\section{Functional relationships for Synthetic experiments}

For Synthetic 1, the functional relationships are

\begin{align*}
    x &= n_x \\
    y &= (3  \log x^2) \times n_y \\
    z &= (4y^2) \times n_z
\end{align*}

For Synthetic 2, the functional relationships were

\begin{align*}
    y &= n_y \\
    x &= (3  \log y^2) \times n_x \\
    z &= (4y^2) \times n_z \\
    w &= (z^{0.5}) \times n_w \\
    s &= n_s \\
    v &= (w^2 \times s^3) \times n_v
\end{align*}

where $n$ with a subscript denotes an exponential noise term with varying scale.

\section{Graph for sample size experiment}

Fig. \ref{figure:sample_size_graph} shows the graph used to generate the data used for the sample size experiment.
The two datasets contained the variables $\mathcal{V}_1 = \{y,x,t,z,u,v\}$ and $\mathcal{V}_2 = \{y,x,z,u,v,w\}$

\begin{figure} [!htb]
\centering
\includegraphics[scale=.29]{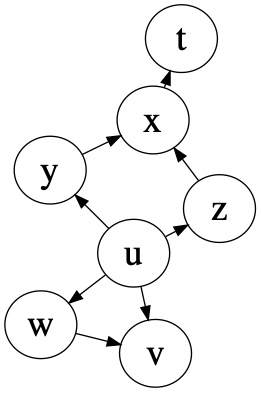} \\
\caption{Data generating graph for sample size experiment.}\label{figure:sample_size_graph}
\end{figure}

\section{Ground truth graph for the protein experiment}

The ground truth graph for the protein experiment \cite{sachs2005causal} is shown in Fig. \ref{figure:protein_gt}.
The data was split into two variable sets as follows: $\mathcal{V}_1 = $\{praf, pmek, plcg, PIP3, PIP3, pakts473, pjnk, PKC, P38, p44/42\} and $\mathcal{V}_2 =$\{praf, pmek, plcg, PIP3, PIP3, pakts473, pjnk, PKC, P38, PKA\}.

\begin{figure}[!htb]
\centering
\includegraphics[scale=.22]{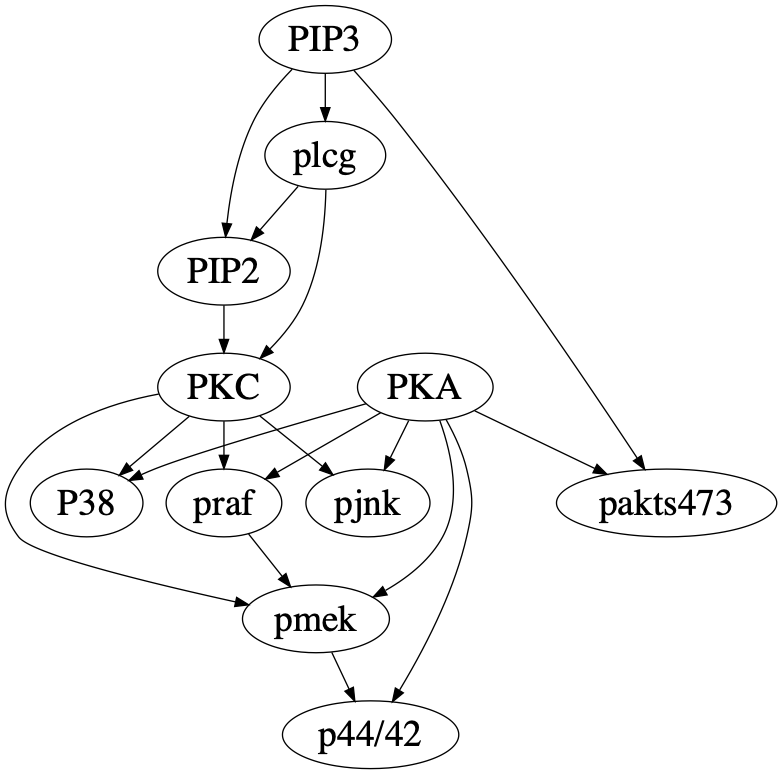} \\
\caption{True graph for protein experiment from \cite{sachs2005causal}}\label{figure:protein_gt}
\end{figure}

%\bibliography{references} 
%\bibliographystyle{ieeetr} 

\end{document}